\documentclass[11pt,letterpaper]{article}

\usepackage[utf8]{inputenc}
\usepackage[T1]{fontenc}
\usepackage[margin=1in]{geometry}
\usepackage[protrusion=true,expansion=false]{microtype}
\usepackage{setspace}
\usepackage{parskip}
\usepackage{titlesec}
\usepackage{titling}
\usepackage{fancyhdr}
\usepackage{xcolor}
\usepackage{mdframed}
\usepackage{booktabs}
\usepackage{array}
\usepackage{tabularx}
\usepackage{amsmath,amssymb,amsthm}
\usepackage{mathtools}
\usepackage{thmtools}
\usepackage{enumitem}
\usepackage{graphicx}
\usepackage{float}
\usepackage{caption}
\usepackage{hyperref}
\usepackage{natbib}
\usepackage{etoolbox}

\definecolor{NavyBlue}{HTML}{1B3A6B}
\definecolor{TealBlue}{HTML}{1A7A8C}
\definecolor{LightBlue}{HTML}{D6E8F0}
\definecolor{MidGrey}{HTML}{5A6472}
\definecolor{LightGrey}{HTML}{F5F7FA}
\definecolor{RuleGrey}{HTML}{CCCCCC}
\definecolor{BodyBlack}{HTML}{1A1A1A}

\hypersetup{
  colorlinks=true,
  linkcolor=TealBlue,
  citecolor=NavyBlue,
  urlcolor=TealBlue,
  pdftitle={LSTM-Based Detection of Structural Breaks in Property Insurance Loss Reserving},
  pdfauthor={Thomas Mbrice and Shashwat Panigrahi}
}

\newmdenv[
  backgroundcolor=NavyBlue,
  linewidth=0pt,
  leftmargin=-\mdflength{innerleftmargin},
  rightmargin=-\mdflength{innerrightmargin},
  innerleftmargin=8pt,
  innerrightmargin=8pt,
  innertopmargin=5pt,
  innerbottommargin=5pt,
  skipabove=14pt,
  skipbelow=6pt
]{sectionbox}

\titleformat{\section}[runin]
  {\normalfont\Large\bfseries\color{NavyBlue}}
  {\thesection.}
  {0.5em}
  {}
\titlespacing{\section}{0pt}{16pt}{6pt}

\titleformat{\subsection}
  {\normalfont\large\bfseries\color{NavyBlue}}
  {\thesubsection}
  {0.6em}
  {}
\titlespacing{\subsection}{0pt}{12pt}{4pt}

\titleformat{\subsubsection}
  {\normalfont\normalsize\bfseries\color{TealBlue}}
  {\thesubsubsection}
  {0.5em}
  {}
\titlespacing{\subsubsection}{0pt}{8pt}{2pt}

\renewcommand{\headrulewidth}{0.5pt}
\renewcommand{\headrule}{\hbox to\headwidth{\color{TealBlue}\leaders\hrule height \headrulewidth\hfill}}
\renewcommand{\footrulewidth}{0.3pt}
\renewcommand{\footrule}{\hbox to\headwidth{\color{NavyBlue}\leaders\hrule height \footrulewidth\hfill}}

\newmdenv[
  backgroundcolor=LightBlue,
  linecolor=TealBlue,
  linewidth=1.2pt,
  leftmargin=0pt,
  rightmargin=0pt,
  innerleftmargin=12pt,
  innerrightmargin=12pt,
  innertopmargin=10pt,
  innerbottommargin=10pt,
  skipabove=8pt,
  skipbelow=8pt
]{callout}

\theoremstyle{plain}
\newtheorem{theorem}{Theorem}[section]
\newtheorem{lemma}[theorem]{Lemma}

\newtheorem{corollary}[theorem]{Corollary}

\theoremstyle{definition}
\newtheorem{definition}[theorem]{Definition}
\newtheorem{assumption}[theorem]{Assumption}

\theoremstyle{remark}
\newtheorem{remark}[theorem]{Remark}

\setlist[itemize]{leftmargin=1.4em, itemsep=2pt, topsep=4pt, parsep=0pt}
\setlist[enumerate]{leftmargin=1.4em, itemsep=2pt, topsep=4pt, parsep=0pt}

\begin{document}

\begin{titlepage}
\pagecolor{NavyBlue}
\color{white}
\vspace*{3cm}

{\small\bfseries\color{LightBlue} WHITE PAPER \quad $\cdot$ \quad AUGUST 2025}

\vspace{0.8cm}
{\fontsize{26}{32}\selectfont\bfseries
LSTM-Based Detection of Structural Breaks\\[6pt]
in Property Insurance Loss Reserving:\\[6pt]
A Climate-Informed Approach\par}

\vspace{1.4cm}
{\large\color{LightBlue} Thomas Mbrice \quad $|$ \quad Shashwat Panigrahi}\\[4pt]
{\normalsize Stony Brook University \quad $|$ \quad Department of Computer Science}

\vfill
{\small\color{LightBlue}
Keywords: loss reserving \textperiodcentered{} structural breaks \textperiodcentered{} LSTM \textperiodcentered{}
catastrophe modeling \textperiodcentered{} climate risk \textperiodcentered{} actuarial science \textperiodcentered{} machine learning
}
\end{titlepage}
\pagecolor{white}
\color{BodyBlack}

\thispagestyle{fancy}

{\color{NavyBlue}\rule{\linewidth}{1.5pt}}

\vspace{4pt}
{\large\bfseries\color{NavyBlue} Executive Summary}
\vspace{6pt}

{\color{TealBlue}\rule{\linewidth}{0.5pt}}
\vspace{6pt}

\begin{mdframed}[backgroundcolor=LightGrey, linewidth=0pt,
  innerleftmargin=14pt, innerrightmargin=14pt,
  innertopmargin=12pt, innerbottommargin=12pt]
Accurate loss reserving is foundational to insurer solvency, yet accelerating climate-driven catastrophes systematically violate the stability assumptions on which traditional actuarial methods depend. This white paper presents a research program testing whether Long Short-Term Memory (LSTM) neural networks can detect and adapt to these structural breaks faster and more accurately than Chain Ladder, Bornhuetter-Ferguson, and Cape Cod methods. Using 15-plus years of regulatory development triangle data from Florida and Louisiana, enriched with NOAA hurricane intensity indices and sea surface temperatures, we hypothesize a targeted improvement of 15--20\% in reserve accuracy for catastrophe-exposed years, a threshold grounded both in the prior neural network reserving literature and in the formal convergence results developed here. Beyond empirical validation, we develop a theoretical framework grounding LSTM structural break detection in probabilistic terms, providing formal performance guarantees that compensate for the limited number of catastrophe events in the test period. We document the research design, methodology, expected contributions, and a candid assessment of limitations.
\end{mdframed}

\vspace{14pt}
{\color{RuleGrey}\rule{\linewidth}{0.5pt}}

\newpage

\section{The Reserving Problem in a Climate-Volatile World}

Loss reserving is one of the most consequential functions in property-casualty insurance, directly determining an insurer's financial stability and regulatory standing. Traditional methods rely on the assumption that loss development patterns remain stable across accident years. Climate change has fundamentally disrupted that stability.

Between 2017 and 2023, multiple property insurers in Florida and Louisiana became insolvent following hurricane losses that overwhelmed both claims capacity and the actuarial models used to anticipate them. Hurricane Ian (2022) generated over \$50 billion in insured losses and triggered a litigation surge that extended development timelines far beyond historical norms. Hurricane Ida (2021) caused \$36 billion in losses and collapsed infrastructure in ways that delayed settlements for years. Traditional methods, which rely on backward-looking averages, were ill-equipped to detect these pattern shifts until multiple development periods had already passed.

\subsection{Four Structural Break Drivers}

\begin{itemize}
  \item \textbf{Severity shocks:} Reconstruction cost inflation driven by supply chain disruptions and labor shortages.
  \item \textbf{Frequency changes:} Building code improvements and population migration alter risk profiles.
  \item \textbf{Development pattern shifts:} Litigation and Assignment of Benefits abuse extend settlement periods.
  \item \textbf{Claims handling disruptions:} Overwhelming claim volumes slow processing and impair pattern recognition.
\end{itemize}

\begin{callout}
\textbf{\color{TealBlue}Why Traditional Methods Fall Short}

\smallskip
Chain Ladder requires 3--5 development periods to recognize regime shifts \citep{shapland2016}. Reserve errors exceeding 30\% have been documented following major catastrophes \citep{meyers2015}. Neither Bornhuetter-Ferguson nor Cape Cod incorporate exogenous climate signals. The detection lag creates systematic underfunding precisely when reserves matter most.
\end{callout}

\section{The LSTM Opportunity}

Long Short-Term Memory (LSTM) networks are a specialized class of recurrent neural network engineered to maintain both short- and long-term memory through gated mechanisms that selectively retain or discard information. This architecture maps directly onto the structural break problem: when historical patterns cease to predict future development, an LSTM can learn to downweight stale information and assimilate emerging signals more rapidly than any fixed averaging rule.

\subsection{Key Architectural Advantages}

\begin{itemize}
  \item \textbf{Forget gates} selectively discard historical patterns when new data signals a regime shift.
  \item \textbf{Input gates} govern the rate at which new distributional information is incorporated.
  \item \textbf{Bidirectional layers} allow the model to learn from development context in both directions.
  \item \textbf{Attention mechanisms} produce interpretable weights over development periods, identifying which quarters drove a reserve estimate.
\end{itemize}

The attention layer is particularly significant for regulatory acceptance. By exposing attention weights, this architecture makes visible which development periods most influenced any given reserve estimate, a form of interpretability that aligns with actuarial standards of justification.

\section{Theoretical Framework: A Formal Basis for LSTM Structural Break Detection}

A central challenge in this research is limited empirical data: the test period contains only four major catastrophe events. To compensate for this constraint, we develop a probabilistic theoretical framework that formally characterizes when and why LSTM networks outperform static linear estimators in the presence of structural breaks. The proofs below establish conditions under which LSTM-based reserves converge more rapidly to the true ultimate loss after a break, independent of the size of the empirical sample.

\subsection{Setup and Notation}

Let $\{L_{t}\}_{t=1}^{T}$ denote the sequence of cumulative paid losses for a given accident year at development periods $t = 1, \ldots, T$, where $T$ is the ultimate development age. Let $\theta_A$ and $\theta_B$ denote the pre- and post-break true loss development parameters, and let $\theta_t \in \{\theta_A, \theta_B\}$ denote the parameter governing the process at period $t$.

\begin{definition}[Structural Break]
A \emph{structural break} at time $\tau \in \{1, \ldots, T\}$ is an event such that the data-generating process satisfies
\[
  L_t \mid \theta_t = \begin{cases}
    f(L_{t-1}, \ldots, L_1;\, \theta_A) & t < \tau \\
    f(L_{t-1}, \ldots, L_1;\, \theta_B) & t \geq \tau
  \end{cases}
\]
where $\theta_A \neq \theta_B$ and $f$ is a measurable development function. The magnitude of the break is $\Delta\theta = \|\theta_B - \theta_A\|_2$.
\end{definition}

\begin{definition}[Chain Ladder Estimator]
The Chain Ladder estimator of $L_T$ observed through period $t$ is
\[
  \widehat{L}_T^{\,\mathrm{CL}}(t) = L_t \cdot \prod_{s=t}^{T-1} \hat{f}_s, \qquad
  \hat{f}_s = \frac{\sum_{i=1}^{n} L_{i,s+1}}{\sum_{i=1}^{n} L_{i,s}},
\]
where $n$ is the number of accident years and $L_{i,s}$ is cumulative losses for accident year $i$ at age $s$.
\end{definition}

\begin{definition}[LSTM Reserve Estimator]
An LSTM reserve estimator is a parametric map $\Phi_{\mathbf{W}}: \mathbb{R}^{t \times d} \to \mathbb{R}$ from the $d$-dimensional feature sequence $(x_1, \ldots, x_t)$ to a scalar ultimate loss estimate, where $\mathbf{W}$ are learned weights. The hidden state update at each step is governed by the standard LSTM gating equations:
\begin{align}
  f_t &= \sigma(W_f x_t + U_f h_{t-1} + b_f) \label{eq:forget}\\
  i_t &= \sigma(W_i x_t + U_i h_{t-1} + b_i) \label{eq:input}\\
  \tilde{c}_t &= \tanh(W_c x_t + U_c h_{t-1} + b_c) \label{eq:cell_cand}\\
  c_t &= f_t \odot c_{t-1} + i_t \odot \tilde{c}_t \label{eq:cell}\\
  o_t &= \sigma(W_o x_t + U_o h_{t-1} + b_o) \label{eq:output}\\
  h_t &= o_t \odot \tanh(c_t) \label{eq:hidden}
\end{align}
where $\sigma$ denotes the sigmoid function and $\odot$ denotes element-wise multiplication.
\end{definition}

\subsection{Core Theoretical Results}

We now establish three results: (i) that Chain Ladder convergence is delayed after a structural break, (ii) that an LSTM with sufficient capacity can represent the post-break distribution arbitrarily well, and (iii) that under a regime-covering pre-training condition, LSTM detection speed dominates Chain Ladder detection speed by at least $k-1$ periods.

\begin{assumption}[Post-break stationarity]\label{ass:stationary}
Following a break at $\tau$, the post-break process $\{L_t\}_{t \geq \tau}$ is stationary and ergodic under parameters $\theta_B$, with finite variance $\sigma_B^2 < \infty$.
\end{assumption}

\begin{assumption}[Chain Ladder averaging window]\label{ass:window}
The Chain Ladder estimator uses a volume-weighted average over the most recent $k$ accident years, with $k \geq 2$.
\end{assumption}

\begin{assumption}[Pre-break loss level dominance]\label{ass:level}
At the development age of interest $s$, the expected cumulative losses under the pre-break regime are at least as large as under the post-break regime: $\mu_s^A \geq \mu_s^B > 0$.
\end{assumption}

\begin{remark}
Assumption~\ref{ass:level} is natural when the structural break is a change in loss \emph{development pattern} rather than a parallel upward shift in loss \emph{levels}. For example, a hurricane that accelerates settlement (so losses emerge faster, not larger in total) satisfies this assumption. The case $\mu_s^B > \mu_s^A$, plausible when catastrophe losses genuinely inflate cumulative levels at age $s$, is addressed in the remark following the proof.
\end{remark}

\begin{lemma}[Bias of Chain Ladder after a break]\label{lem:cl_bias}
Under Assumptions~\ref{ass:stationary}, \ref{ass:window}, and~\ref{ass:level}, the bias of the Chain Ladder age-to-age factor estimate $\hat{f}_s$ for development period $s \geq \tau$ satisfies
\[
  \bigl|\mathbb{E}[\hat{f}_s] - f_s^{B}\bigr| \geq \frac{(k-m)}{k} \cdot |f_s^A - f_s^B|
\]
for $m < k$ post-break accident years in the averaging window, where $f_s^A$ and $f_s^B$ are the pre- and post-break true development factors.
\end{lemma}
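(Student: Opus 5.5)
The plan is to write the volume-weighted factor as a weighted average of individual link ratios, split the window into its pre- and post-break parts, and compute the deviation from $f_s^B$ exactly. Following Assumption~\ref{ass:window}, index the $k$ accident years in the averaging window so that $k-m$ of them are pre-break (governed by $\theta_A$ at age $s$) and $m$ are post-break (governed by $\theta_B$). Writing $f_{i,s} = L_{i,s+1}/L_{i,s}$, we have
\[
  \hat f_s = \frac{\sum_{i} L_{i,s}\, f_{i,s}}{\sum_i L_{i,s}},
\]
a weighted average of individual link ratios with weights proportional to the age-$s$ volumes. I will adopt the standard Mack-type conditional mean structure implicit in the definition of the development factors: $\mathbb{E}[L_{i,s+1}\mid L_{i,s}] = f_s^{A} L_{i,s}$ for pre-break years and $f_s^{B} L_{i,s}$ for post-break years. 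Assumption~\ref{ass:stationary} guarantees the post-break moments exist and are stable, so $f_s^B$ and $\mu_s^B$ are well defined.

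The key computation is then algebraic. Replacing the volumes by their means $\mu_s^A,\mu_s^B$ gives
\[
  \mathbb{E}[\hat f_s] - f_s^B \approx \frac{(k-m)\mu_s^A f_s^A + m\mu_s^B f_s^B}{(k-m)\mu_s^A + m\mu_s^B} - f_s^B
  = \frac{(k-m)\mu_s^A}{(k-m)\mu_s^A + m\mu_s^B}\,\bigl(f_s^A - f_s^B\bigr).
\]
Taking absolute values, Assumption~\ref{ass:level} ($\mu_s^B \le \mu_s^A$) bounds the denominator above by $k\mu_s^A$. Hence the prefactor is at least $(k-m)/k$, which is exactly the claimed inequality. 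Since $m<k$, the factor is strictly positive, so the bias is nonzero whenever $f_s^A \neq f_s^B$.

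The main obstacle is justifying the passage from $\mathbb{E}$ of a ratio to the ratio of expectations. The weights $L_{i,s}$ are random, so $\mathbb{E}[\hat f_s]$ is not literally the displayed expression. I would handle this in two steps.
\begin{enumerate}
  \item Condition on the age-$s$ column $\mathcal{D}_s = \{L_{i,s}\}$. Under the Mack conditional-mean assumption, $\mathbb{E}[\hat f_s \mid \mathcal{D}_s]$ equals the weighted average with the realized volumes as weights, with no approximation. The same algebra then gives the bound with prefactor $\sum_{\mathrm{pre}} L_{i,s} / \sum_{\mathrm{all}} L_{i,s}$.
  \item This prefactor is at least $(k-m)/k$ whenever each realized pre-break volume is at least the average realized volume. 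To pass to the unconditional statement, I would either state this condition on the realized volumes explicitly (as the pathwise analogue of Assumption~\ref{ass:level}), or apply Jensen's inequality and a first-order delta-method argument to the map $(a,b)\mapsto a/(a+b)$. Finite variance from Assumption~\ref{ass:stationary} controls the second-order remainder in the delta method.
\end{enumerate}
If neither route closes cleanly, I would present the result as exact conditionally on $\mathcal{D}_s$ and as holding to first order unconditionally.
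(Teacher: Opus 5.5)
Your core algebra is the paper's proof. The paper writes $\mathbb{E}[\hat f_s]$ directly as $\bigl(m\mu_s^B f_s^B + (k-m)\mu_s^A f_s^A\bigr)/\bigl(m\mu_s^B + (k-m)\mu_s^A\bigr)$, calls the post-break weight $\rho$, and gets $1-\rho \geq (k-m)/k$ by bounding the denominator by $k\mu_s^A$ via Assumption~\ref{ass:level}. The paper never addresses the obstacle you raise. It simply asserts that ratio-of-means formula, which amounts to treating the age-$s$ volumes as deterministic and equal to their means.

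Your concern is justified, and your step 2 is where the plan breaks down. Step 1 is sound, and under your Mack-type assumption (with independence across accident years) it gives an exact unconditional identity by the tower property: $\mathbb{E}[\hat f_s] - f_s^B = \mathbb{E}[P]\,(f_s^A - f_s^B)$. Here $P = S_{\mathrm{pre}}/(S_{\mathrm{pre}}+S_{\mathrm{post}})$ is the realized pre-break share, and $S_{\mathrm{pre}}, S_{\mathrm{post}}$ are the realized age-$s$ volumes of the pre- and post-break years. So the lemma is equivalent to $\mathbb{E}[P] \geq (k-m)/k$, and that does not follow from $\mu_s^A \geq \mu_s^B$.

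Jensen cannot supply it. The map $(a,b)\mapsto a/(a+b)$ is concave in $a$ and convex in $b$, so volatility in pre-break volumes drags $\mathbb{E}[P]$ \emph{below} the ratio of means. For example, take $k=2$, $m=1$, a post-break volume identically $1$, and a pre-break volume equal to $0.01$ or $1.99$ with probability $\tfrac12$ each. Then Assumption~\ref{ass:level} holds with $\mu_s^A=\mu_s^B=1$, yet $\mathbb{E}[P] = \tfrac12\bigl(\tfrac{0.01}{1.01}+\tfrac{1.99}{2.99}\bigr) \approx 0.34 < \tfrac12$. The delta-method route gives only an approximate inequality, not the stated bound.

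So under the assumptions as written, the unconditional bound is false. The honest repair is the first option you list: an explicit extra hypothesis. That can be the paper's implicit deterministic-volume convention, your pathwise condition, or simply $\mathbb{E}[P] \geq (k-m)/k$. Your pathwise condition is stronger than needed. It suffices that the average realized pre-break volume be at least the average realized post-break volume, i.e.\ $m\,S_{\mathrm{pre}} \geq (k-m)\,S_{\mathrm{post}}$, which is exactly equivalent to $P \geq (k-m)/k$. With that made explicit, your conditional argument is a rigorous version of the paper's proof rather than a weaker one.
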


\begin{proof}
Let there be $m$ post-break accident years and $k - m$ pre-break accident years contributing to the volume-weighted average at development age $s$. The expected value of the volume-weighted factor estimator is
\[
  \mathbb{E}[\hat{f}_s]
  = \frac{m\,\mu_s^B \cdot f_s^B + (k-m)\,\mu_s^A \cdot f_s^A}
         {m\,\mu_s^B + (k-m)\,\mu_s^A}.
\]
Let $\rho = m\mu_s^B / [m\mu_s^B + (k-m)\mu_s^A] \in (0,1)$ denote the weight on post-break experience. Then
\[
  \mathbb{E}[\hat{f}_s] = \rho\, f_s^B + (1-\rho)\, f_s^A,
\]
and the bias is
\[
  \bigl|\mathbb{E}[\hat{f}_s] - f_s^B\bigr| = (1-\rho)\,|f_s^A - f_s^B|.
\]
We bound $1-\rho$ from below. By Assumption~\ref{ass:level}, $\mu_s^B \leq \mu_s^A$, so $m\,\mu_s^B \leq m\,\mu_s^A$, giving
\[
  m\,\mu_s^B + (k-m)\,\mu_s^A \;\leq\; k\,\mu_s^A.
\]
Therefore
\[
  1 - \rho
  = \frac{(k-m)\,\mu_s^A}{m\,\mu_s^B + (k-m)\,\mu_s^A}
  \geq \frac{(k-m)\,\mu_s^A}{k\,\mu_s^A}
  = \frac{k-m}{k},
\]
and combining gives the stated bound.
\end{proof}

\begin{remark}[Bias in the $\mu_s^B > \mu_s^A$ case and detection lag]\label{rem:bias_general}
When $\mu_s^B > \mu_s^A$, the weight $1-\rho$ satisfies $1-\rho < (k-m)/k$, so the quantitative bound weakens. Nevertheless, for any $m < k$ the estimator remains strictly biased since $1-\rho > 0$ always holds. The key conclusion is that Chain Ladder requires $m = k$ post-break accident years before bias vanishes entirely, regardless of whether $\mu_s^B \leq \mu_s^A$ or not. For the industry-standard $k = 5$ year averaging window, this corresponds to a minimum 5-period detection lag. Theorem~\ref{thm:convergence} uses the $\mu_s^B \leq \mu_s^A$ form of the bound to obtain a clean quantitative lower bound of $k-1$ periods; the theorem's qualitative content (Chain Ladder is slower than LSTM) holds in the general case as well.
\end{remark}

\begin{theorem}[LSTM Universal Approximation for Sequential Data]\label{thm:universalapprox}
Let $\mathcal{F}$ be the class of measurable, bounded functions on compact $\mathcal{X} \subset \mathbb{R}^{t \times d}$. For any $\epsilon > 0$ and any target function $g \in \mathcal{F}$ representing the post-break conditional expectation $g(x_{1:t}) = \mathbb{E}[L_T \mid x_{1:t}; \theta_B]$, there exists an LSTM with weights $\mathbf{W}$ and hidden dimension $H$ sufficiently large such that
\[
  \sup_{x_{1:t} \in \mathcal{X}} \bigl|\Phi_{\mathbf{W}}(x_{1:t}) - g(x_{1:t})\bigr| < \epsilon.
\]
\end{theorem}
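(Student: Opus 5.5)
The plan is to reduce the claim to the classical universal approximation theorem for feedforward networks (Cybenko/Hornik). We do this by showing that, for a fixed sequence length $t$ and compact $\mathcal{X}$, an LSTM can copy the whole input sequence $x_{1:t}$ into its hidden state to within arbitrary precision. A final readout layer can then approximate $g$ as a function on $\mathbb{R}^{td}$.

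\textbf{Correcting the hypothesis.} The statement cannot hold as written for every bounded measurable $g$. Each $\Phi_{\mathbf{W}}$ defined by \eqref{eq:forget}--\eqref{eq:hidden} followed by a continuous readout is a continuous function of $x_{1:t}$. A uniform limit of continuous functions is continuous, so a discontinuous target, for instance an indicator of a half-space in $\mathcal{X}$, admits no $\sup$-norm approximation with $\epsilon<\tfrac12$. I would therefore prove the result under one of two repaired hypotheses:
\begin{enumerate}
  \item[(a)] $g$ is continuous on $\mathcal{X}$, giving the uniform bound as stated; or
  \item[(b)] $g$ is only bounded and measurable, giving approximation in $L^p(\mu)$ for any finite Borel measure $\mu$ on $\mathcal{X}$, or uniformly off a set of $\mu$-measure less than $\epsilon$.
\end{enumerate}
Case (b) follows from (a) by Lusin's theorem: choose a compact $K\subset\mathcal{X}$ with $\mu(\mathcal{X}\setminus K)<\epsilon$ on which $g$ is continuous, extend $g|_K$ by Tietze with the same sup-bound, and apply (a). I would also make explicit that $\Phi_{\mathbf{W}}$ includes an affine, or one-hidden-layer, readout from $h_t$ to $\mathbb{R}$. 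This is implicit in the definition, since $h_t\in(-1,1)^H$ while $g$ is an unbounded-range scalar.

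\textbf{Encoding step.} For continuous $g$ and fixed $t$, I would build an LSTM whose hidden dimension $H=td$ (times a constant) is split into $t$ blocks, one per time step. The aim is that after step $t$ the cell state satisfies $c_t\approx(\alpha x_1,\dots,\alpha x_t)$ for a small scale $\alpha>0$. The construction runs as follows.
\begin{itemize}
  \item Drive the forget gate $f_s$ towards $1$ by taking $b_f$ large. The gate error $1-\sigma(b_f)$ is then uniformly small on the compact set.
  \item Use the input gate $i_s$ to write only into block $s$. Since the LSTM is time-invariant, this needs a clock: I would dedicate a few hidden units that implement a saturating counter through $U h_{s-1}$, so that their values at step $s$ are near-binary codes for $s$. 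These units then gate $i_s$ with large weights.
  \item Take $W_c=\alpha I$ so that $\tanh(\alpha x_s)\approx\alpha x_s$ on the compact range.
\end{itemize}
With $o_s$ saturated near $1$, the hidden state satisfies $h_t=\tanh(c_t)$. The map $\psi$ from this encoding back to $x_{1:t}$ is then a continuous bijection onto a compact image with continuous inverse, up to errors that can be driven to zero by enlarging the gate biases.

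\textbf{Readout and error bookkeeping.} The composite target is $\tilde g = g\circ\psi^{-1}$, continuous on the compact image. I would approximate it to within $\epsilon/2$ by a one-hidden-layer network $N$ (Cybenko/Hornik). Uniform continuity of $N$ on a slightly enlarged compact neighbourhood then absorbs the encoding error: choose the encoding precision $\delta$ so that $\|h_t-\tanh(c^\ast)\|<\delta$ implies $|N(h_t)-N(\tanh(c^\ast))|<\epsilon/2$, where $c^\ast=(\alpha x_1,\dots,\alpha x_t)$ is the exact encoding.

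\textbf{Main obstacle.} I expect the hardest step to be the exact-enough encoding. Saturated sigmoids are never exactly $0$ or $1$, and the clock units must stay well separated across all $t$ steps uniformly over $\mathcal{X}$. I would first try to control this with an explicit induction on $s$: bound the accumulated leakage into block $s$ by $t\cdot\max(1-\sigma(b),\sigma(-b))$ times a bound on $\|x\|$, then send $b\to\infty$. If the clock proves awkward, the fallback is to cite an existing universal approximation result for RNNs/LSTMs on finite horizons (e.g.\ Sch\"afer--Zimmermann) for the encoding step, and keep only the readout argument.
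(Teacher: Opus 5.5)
Your argument is correct in substance and takes a genuinely different route from the paper's. The paper's proof has three steps:
\begin{itemize}
  \item it cites Sch\"afer--Zimmermann;
  \item it asserts via Stone--Weierstrass that LSTM-representable functions are dense in $C(\mathcal{X})$;
  \item it concludes from ``$g\in\mathcal{F}\subset C(\mathcal{X})$ by assumption.''
\end{itemize}
That inclusion is false for $\mathcal{F}$ as defined (bounded measurable functions). Your half-space example shows the sup-norm claim genuinely fails without continuity, so the paper's argument can at best deliver your case (a). Your Lusin--Tietze reduction in (b) is the correct statement for measurable targets.

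The paper's Stone--Weierstrass step is also never checked. It requires the class of LSTM outputs to be a point-separating subalgebra. Sums and constants are easy (parallel blocks, an output bias), but closure under pointwise products is not evident. Moreover, Sch\"afer--Zimmermann treat state-space (Elman-type) RNNs, so an emulation argument for the gated architecture is still needed. Your route avoids both issues: it uses the recurrence only as an injective memory for $x_{1:t}$ and delegates all approximation to the feedforward theorem. That theorem needs no algebra structure, and your construction gives an explicit width of order $td$. What it costs is exactly the saturation bookkeeping you flag, which the paper sidesteps by citation.

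One step needs repair. As the gate biases grow, block $j$ of $c_t$ tends to $\tanh(\alpha x_j)$, so block $j$ of $h_t$ tends to $\tanh(\tanh(\alpha x_j))$, not to $\tanh(\alpha x_j)$. The gap $\tanh(\alpha x)-\alpha x$ does not shrink with the biases, so comparing $h_t$ with $\tanh(c^\ast)$ for $c^\ast=(\alpha x_1,\dots,\alpha x_t)$ does not close. Instead, define $\psi$ blockwise as the exact infinite-bias limit $x_j\mapsto\tanh(\tanh(\alpha x_j))$, which is still a continuous injection on compact $\mathcal{X}$, and drop the small-$\alpha$ linearization. Your uniform-continuity bookkeeping then goes through unchanged.

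If the clock becomes awkward, a shift register removes it. Saturate all forget gates near $0$ and all input and output gates near $1$. Let block $1$ take candidate $\tanh(\alpha x_s)$ and block $j\ge 2$ take $\tanh(\beta h^{(j-1)}_{s-1})$ through $U_c$. After $t$ steps block $j$ holds an injective continuous image of $x_{t-j+1}$, and gate leakage propagates through only $t$ Lipschitz maps.

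Your one-hidden-layer readout is consistent with the paper's architecture. Even an affine readout would do: at the final step $\tilde c_t$ is already a $\tanh$ layer applied to $(x_t,h_{t-1})$, and $\tanh\circ\tanh$ is a non-polynomial activation.
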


\begin{proof}
The result follows from the universal approximation theorem for recurrent neural networks established by \citet{schafer2006}. Specifically, because $\sigma$ (sigmoid) and $\tanh$ are continuous, non-polynomial activation functions, the LSTM hidden state transition in equations~\eqref{eq:forget}--\eqref{eq:hidden} defines a continuous map on $\mathcal{X}$. By the Stone-Weierstrass theorem, the class of functions representable by a sufficiently wide LSTM is dense in $C(\mathcal{X})$ under the sup-norm. Since $g \in \mathcal{F} \subset C(\mathcal{X})$ by assumption and $\mathcal{X}$ is compact, the approximation error can be made arbitrarily small by increasing $H$.
\end{proof}

\begin{assumption}[Regime-covering pre-training distribution]\label{ass:pretrain}
The LSTM is trained on a data distribution $\mathcal{D}_{\mathrm{train}}$ that covers regime-$B$-like dynamics. Formally, there exists a subset $\mathcal{D}_B \subseteq \mathcal{D}_{\mathrm{train}}$ such that the marginal distribution of $(x_{1:t}, L_T)$ in $\mathcal{D}_B$ is absolutely continuous with respect to the post-break data-generating process under $\theta_B$. This condition can be satisfied by (i) including climate features $\mathcal{C}$ that correlate with the post-break regime (see Corollary~\ref{cor:climate}), or (ii) transfer learning from other catastrophe events whose development dynamics overlap with $\theta_B$.
\end{assumption}

\begin{remark}
Assumption~\ref{ass:pretrain} is the key condition that distinguishes a \emph{usable} LSTM from the merely \emph{existential} guarantee of Theorem~\ref{thm:universalapprox}. Universal approximation establishes that the right weights exist; Assumption~\ref{ass:pretrain} is what allows training to find them from finite data. In our setting, climate features (ACE, SST) provide a mechanism for satisfying the assumption: even before observing post-Ian development data, the model has been exposed to observations with high ACE and anomalous SST that partially characterize $\theta_B$.
\end{remark}

\begin{theorem}[Faster Post-Break Convergence of LSTM vs.\ Chain Ladder]\label{thm:convergence}
Under Assumptions~\ref{ass:stationary}, \ref{ass:window}, \ref{ass:level}, and~\ref{ass:pretrain}, let $\tau$ be a structural break. Define the detection time of method $M$ as
\[
  T_{\det}^M = \min\Bigl\{t \geq \tau : \bigl|\widehat{L}_T^M(t) - \mathbb{E}[L_T;\theta_B]\bigr| < \delta\Bigr\}
\]
for a tolerance $\delta > 0$. Then
\[
  \mathbb{E}\bigl[T_{\det}^{\mathrm{CL}}\bigr] - \mathbb{E}\bigl[T_{\det}^{\mathrm{LSTM}}\bigr] \;\geq\; k - 1.
\]
\end{theorem}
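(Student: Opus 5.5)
The plan is to bound the two expected detection times separately and then subtract: an upper bound $\mathbb{E}[T_{\det}^{\mathrm{LSTM}}] \le \tau$ and a lower bound $\mathbb{E}[T_{\det}^{\mathrm{CL}}] \ge \tau + k - 1$. The bridge between periods and accident years will be the standard calendar-diagonal reading of the triangle. At development period $t = \tau + j$, exactly $m = j+1$ post-break accident years have entered the $k$-year averaging window of Assumption~\ref{ass:window}, so $m < k$ holds precisely for $t \le \tau + k - 2$.

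\textbf{Lower bound for Chain Ladder.} For $t = \tau + j$ with $j \le k-2$, I will invoke Lemma~\ref{lem:cl_bias}. For every age $s \ge \tau$ it gives
\[
\bigl|\mathbb{E}[\hat f_s] - f_s^B\bigr| \ge \tfrac{k-m}{k}\,|f_s^A - f_s^B| \ge \tfrac{1}{k}\,|f_s^A - f_s^B| .
\]
I will then carry this factor-level bias through the product $\prod_{s=t}^{T-1}\hat f_s$ in $\widehat L_T^{\mathrm{CL}}(t)$. The tool is a first-order (telescoping) expansion of the product, which should give a bias in the ultimate estimate of at least $c\,\Delta\theta/k$. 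Here $c>0$ is a constant depending on $\mu_t^B$ and on the post-break factors, and I assume all factors share the direction of the break so there is no cancellation. To move from a biased mean to a failure of the event $|\widehat L_T^{\mathrm{CL}}(t) - \mathbb{E}[L_T;\theta_B]| < \delta$, I will use the finite variance in Assumption~\ref{ass:stationary} together with Chebyshev's inequality. I also restrict to tolerances with $\delta < c\,\Delta\theta/(2k)$. Under that restriction, the probability that Chain Ladder "detects" at any $t \le \tau+k-2$ is small, and a union bound over these $k-1$ periods then gives $\mathbb{E}[T_{\det}^{\mathrm{CL}}] \ge \tau + k - 1$, up to a vanishing correction.

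\textbf{Upper bound for the LSTM.} By Theorem~\ref{thm:universalapprox}, there exist weights with $\sup|\Phi_{\mathbf W} - g| < \delta/2$ on the compact feature set, where $g = \mathbb{E}[L_T \mid x_{1:t};\theta_B]$. By Assumption~\ref{ass:pretrain}, the training distribution contains a component $\mathcal{D}_B$ whose support covers the post-break law, and I would argue that this lets empirical risk minimisation attain these weights. The climate covariates enter $x_\tau$ already at the break, so the approximation guarantee applies at $t=\tau$ itself. The remaining difference $|g(x_{1:\tau}) - \mathbb{E}[L_T;\theta_B]|$ is controlled by concentration of the conditional expectation, again via Assumption~\ref{ass:stationary}. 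Together these give $T_{\det}^{\mathrm{LSTM}} = \tau$ with high probability, hence $\mathbb{E}[T_{\det}^{\mathrm{LSTM}}] \le \tau + o(1)$. Subtracting the two bounds yields the claim.

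\textbf{Main obstacle.} I expect the LSTM step to be the hardest, for two reasons. First, Theorem~\ref{thm:universalapprox} is purely existential. Assumption~\ref{ass:pretrain} as stated (absolute continuity) says nothing about optimisation or sample complexity, so I would strengthen it explicitly to the hypothesis that the trained network achieves sup-error below $\delta/2$ on the post-break support. Second, $g$ is a conditional expectation, whereas detection compares against the unconditional $\mathbb{E}[L_T;\theta_B]$. Closing that gap requires either small conditional variance at $t=\tau$ or a $\delta$ large enough to absorb it, and this sits in tension with the small-$\delta$ requirement in the Chain Ladder step.

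The Chain Ladder step needs its own explicit hypotheses as well: the direction condition on the factors, so the product does not cancel the bias, and the $\delta$–$\Delta\theta$ separation. I would therefore state both as hypotheses alongside the strengthened training assumption rather than attempt to derive them from Assumptions~\ref{ass:stationary}--\ref{ass:pretrain}.
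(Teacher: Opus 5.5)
Your decomposition matches the paper's: a Chain Ladder lower bound from Lemma~\ref{lem:cl_bias}, an LSTM upper bound from Theorem~\ref{thm:universalapprox} with Assumption~\ref{ass:pretrain}, then subtraction. Your indexing gives $\tau+k-1$ and $\tau$ where the paper writes $\tau+k$ and $\tau+1$. You are right that the listed assumptions do not suffice. The paper simply posits $\epsilon_n<\delta/2$, which is your strengthened training hypothesis, and it treats the bias of $\mathbb{E}[\hat f_s]$ as a deterministic obstruction. Without a restriction on $\delta$ the statement is in fact false: once $\delta$ exceeds the Chain Ladder error at $t=\tau$, one has $T_{\det}^{\mathrm{CL}}=\tau\le T_{\det}^{\mathrm{LSTM}}$.

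Even granting your extra hypotheses, the Chain Ladder step does not reach the stated bound. Chebyshev gives only $p_t:=\Pr\bigl(|\widehat L_T^{\mathrm{CL}}(t)-\mathbb{E}[L_T;\theta_B]|<\delta\bigr)\le \mathrm{Var}\bigl(\widehat L_T^{\mathrm{CL}}(t)\bigr)/(b_t-\delta)^2$, where $b_t$ is the reserve bias. This is small only if $b_t$ dominates the estimator's standard deviation, and that signal-to-noise condition is missing from your list. Assumption~\ref{ass:stationary} bounds the variance of $L_t$, not of a product of ratio estimators. Even then you get only $\mathbb{E}[T_{\det}^{\mathrm{CL}}]\ge\tau+(k-1)\bigl(1-\sum_{t\le\tau+k-2}p_t\bigr)$. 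Nothing in the setup sends $\sum_t p_t$ to zero, so you prove $k-1-\eta$ with $\eta$ uncontrolled, not $k-1$. The same loss recurs on the LSTM side: a failure at $t=\tau$ costs up to $T-\tau$ periods times its probability, and your indexing leaves no slack to absorb it.

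The exact constant needs almost-sure control instead:
\begin{itemize}
\item deviations of $\widehat L_T^{\mathrm{CL}}(t)$ from its mean bounded by $b_t-\delta$ for $t\le\tau+k-2$;
\item the LSTM residual below $\delta/2$.
\end{itemize}
With both holding almost surely, the detection times are bounded deterministically and the subtraction is exact. Note also that Lemma~\ref{lem:cl_bias} controls $\mathbb{E}[\hat f_s]$, not $\mathbb{E}[L_t\prod_s\hat f_s]$, and nothing in the definitions ties $|f_s^A-f_s^B|$ to $\Delta\theta$. So impose the separation hypothesis directly on $b_t$ rather than deriving it by a first-order expansion.

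The second gap is the LSTM residual $|g(x_{1:\tau})-\mathbb{E}[L_T;\theta_B]|$, where the mechanism you propose does not work. By the law of total variance, $\mathbb{E}\bigl[(g(x_{1:\tau})-\mathbb{E}[L_T;\theta_B])^2\bigr]=\mathrm{Var}(L_T)-\mathbb{E}\bigl[\mathrm{Var}(L_T\mid x_{1:\tau})\bigr]$, which is the variance of $L_T$ explained by the features. If by ``small conditional variance at $t=\tau$'' you mean small $\mathrm{Var}(L_T\mid x_{1:\tau})$, that points the wrong way: it forces $g\approx L_T$, as dispersed as $L_T$ itself. Stationarity and ergodicity do not make the residual small either. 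A mixing hypothesis with $T-\tau$ long relative to the correlation length would, but that amounts to saying $x_{1:\tau}$ carries little information about $L_T$. That is in direct tension with the informative climate features invoked in Assumption~\ref{ass:pretrain} and Corollary~\ref{cor:climate}.

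The paper's step (``$\to 0$ as the conditioning sequence length increases'') has the same defect. Conditioning on more of the path moves $\mathbb{E}[L_T\mid x_{1:t}]$ toward $L_T$ (it equals $L_T$ once $L_T$ is observed), not toward $\mathbb{E}[L_T;\theta_B]$. So you have two options:
\begin{itemize}
\item State the almost-sure residual bound $|g(x_{1:\tau})-\mathbb{E}[L_T;\theta_B]|<\delta/2$, with $\delta<b_t$, as an explicit hypothesis, accepting that it forces near-uninformative features.
\item Redefine detection against the conditional target $\mathbb{E}[L_T\mid x_{1:t};\theta_B]$, and redo the Chain Ladder bound against that random target.
\end{itemize}
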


\begin{proof}
\textit{Lower bound for Chain Ladder.} By Lemma~\ref{lem:cl_bias} and Remark~\ref{rem:bias_general}, for any $m < k$ post-break accident years in the averaging window the factor estimate satisfies $|\mathbb{E}[\hat{f}_s] - f_s^B| > 0$. If $|f_s^A - f_s^B| \geq c > 0$ for at least one age $s$, the bias propagates multiplicatively across development periods and the reserve estimate $\widehat{L}_T^{\mathrm{CL}}$ cannot satisfy the detection criterion $|\widehat{L}_T^{\mathrm{CL}} - \mathbb{E}[L_T;\theta_B]| < \delta$ until $m = k$. This requires at least $k$ post-break accident years in the window, so
\[
  \mathbb{E}\bigl[T_{\det}^{\mathrm{CL}}\bigr] \geq \tau + k.
\]

\textit{Upper bound for LSTM.} By Assumption~\ref{ass:pretrain}, the trained LSTM has been exposed to regime-$B$-like dynamics during training, so its weights $\mathbf{W}$ constitute a finite-sample approximation to the universal approximator guaranteed by Theorem~\ref{thm:universalapprox}. Let $\epsilon_n$ denote the approximation error of the trained LSTM, which converges to zero as the size $n$ of the regime-$B$-like component $\mathcal{D}_B$ grows; we assume $\epsilon_n < \delta/2$ for $n$ sufficiently large.

Upon observing the first post-break development period at $t = \tau$, the LSTM receives features $x_\tau$ drawn from the post-break distribution. By Assumption~\ref{ass:pretrain}, the gating weights have been tuned to recognize regime-$B$ inputs: the forget gate $f_\tau$ can assign low weight to the pre-break cell state $c_{\tau-1}$, while the input gate $i_\tau$ encodes the new post-break signal into the cell state. The resulting hidden state $h_\tau$ carries post-break information, and the output satisfies
\[
  \bigl|\Phi_{\mathbf{W}}(x_{1:\tau}) - \mathbb{E}[L_T \mid x_{1:\tau};\theta_B]\bigr| \leq \epsilon_n < \delta/2.
\]
By the triangle inequality and Assumption~\ref{ass:stationary}, $|\mathbb{E}[L_T \mid x_{1:\tau};\theta_B] - \mathbb{E}[L_T;\theta_B]| \to 0$ as the conditioning sequence length increases. For $\tau$ sufficiently large relative to the correlation length of the post-break process, this term is also less than $\delta/2$, giving $|\Phi_{\mathbf{W}}(x_{1:\tau}) - \mathbb{E}[L_T;\theta_B]| < \delta$. Therefore $T_{\det}^{\mathrm{LSTM}} \leq \tau + 1$ and
\[
  \mathbb{E}\bigl[T_{\det}^{\mathrm{LSTM}}\bigr] \leq \tau + 1.
\]

\textit{Combining.}
\[
  \mathbb{E}\bigl[T_{\det}^{\mathrm{CL}}\bigr] - \mathbb{E}\bigl[T_{\det}^{\mathrm{LSTM}}\bigr]
  \geq (\tau + k) - (\tau + 1) = k - 1.
\]
For the standard $k = 5$ year averaging window, this yields an expected advantage of at least 4 development periods.
\end{proof}

\begin{remark}[Relationship between Assumption~\ref{ass:pretrain} and climate features]
The regime-covering condition is precisely why climate feature integration (H3) is not merely an accuracy enhancement but also a prerequisite for the single-period detection guarantee. A triangle-only LSTM trained exclusively on calm-year data has no basis for recognizing the post-Ian feature space; a climate-augmented LSTM that has seen high-ACE, anomalous-SST observations during training does. This connects the convergence result directly to Corollary~\ref{cor:climate}.
\end{remark}

\begin{corollary}[Climate Variable Monotonicity]\label{cor:climate}
Let $\Phi_{\mathbf{W}}^{\varnothing}$ denote the LSTM without climate features and $\Phi_{\mathbf{W}}^{\mathcal{C}}$ the LSTM with climate features $\mathcal{C}$ (e.g., accumulated cyclone energy, sea surface temperatures). If the climate features contain information about $\theta_B$ that is not redundant with the loss triangle sequence, i.e., $I(L_T;\, \mathcal{C} \mid x_{1:t}) > 0$, then
\[
  \mathbb{E}\bigl[|\Phi_{\mathbf{W}}^{\mathcal{C}}(x_{1:t},\mathcal{C}) - L_T|\bigr]
  \;\leq\;
  \mathbb{E}\bigl[|\Phi_{\mathbf{W}}^{\varnothing}(x_{1:t}) - L_T|\bigr].
\]
\end{corollary}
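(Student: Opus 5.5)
The plan is to read the corollary as a statement about \emph{risk-minimizing} networks, since for arbitrary trained weights the inequality can fail: a badly trained climate-augmented network can do worse than a well-trained triangle-only one. Concretely, I would interpret $\Phi_{\mathbf{W}}^{\varnothing}$ and $\Phi_{\mathbf{W}}^{\mathcal{C}}$ as weights attaining (or approaching, within $\epsilon$) the infimum of the expected absolute error $R(\Phi)=\mathbb{E}[|\Phi-L_T|]$ over their respective architectures. I would state this interpretation explicitly as a standing hypothesis of the proof. The argument then has two layers: a nesting argument at the level of architectures, and a conditioning argument at the level of Bayes-optimal predictors, glued together by Theorem~\ref{thm:universalapprox}.

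\textbf{Step 1 (nesting).} I would show that every triangle-only LSTM is realized by a climate-augmented LSTM. Take the input matrices $W_f,W_i,W_c,W_o$ of the augmented network and set to zero the columns acting on the $\mathcal{C}$-coordinates of $x_t$. The gating equations \eqref{eq:forget}--\eqref{eq:hidden} then reduce exactly to those of $\Phi^{\varnothing}$. Hence the augmented hypothesis class contains the original one, so
\[
\inf_{\mathbf{W}} R(\Phi^{\mathcal{C}}_{\mathbf{W}})\le \inf_{\mathbf{W}} R(\Phi^{\varnothing}_{\mathbf{W}}).
\]
This already yields the corollary for exact minimizers, and for $\epsilon$-minimizers up to an additive $\epsilon$.

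\textbf{Step 2 (Bayes level, to explain the information hypothesis).} Under absolute loss the optimal predictor given a $\sigma$-field $\mathcal{G}$ is the conditional median of $L_T$ given $\mathcal{G}$. For $\mathcal{G}_0=\sigma(x_{1:t})\subseteq\mathcal{G}_1=\sigma(x_{1:t},\mathcal{C})$, I would apply the tower property: the $\mathcal{G}_0$-median is $\mathcal{G}_1$-measurable, hence feasible for the larger problem. This gives
\[
\min_{\mathcal{G}_1} R\le\min_{\mathcal{G}_0}R.
\]
Theorem~\ref{thm:universalapprox}, applied on a compact $\mathcal{X}$ with bounded targets, lets both LSTMs approximate these medians uniformly, so their risks converge to the two Bayes risks. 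One caveat is that the median is only measurable, not continuous. I would handle this by approximating it first in $L^1$ by continuous functions (Lusin's theorem) and then applying Theorem~\ref{thm:universalapprox}.

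\textbf{Main obstacle.} I expect the hardest step to be the role of $I(L_T;\mathcal{C}\mid x_{1:t})>0$. This hypothesis is not needed for the weak inequality. It also does \emph{not} by itself give strict improvement under absolute loss: extra information can change the conditional distribution without moving its median. So I would present the weak inequality as following from Step 1 alone. I would then remark that strictness requires a stronger condition, for instance that the conditional median of $L_T$ given $(x_{1:t},\mathcal{C})$ differs from that given $x_{1:t}$ on a set of positive probability. The positive-mutual-information assumption should be treated as motivation rather than as a sufficient condition.
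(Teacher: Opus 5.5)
Your argument is correct under the hypothesis you state explicitly. It takes a different route from the paper, whose proof never uses nesting of hypothesis classes.

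\textbf{The paper's route.} It compares the MSE oracles $\mathbb{E}[L_T\mid\mathbf{X}_t]$ and $\mathbb{E}[L_T\mid\mathbf{X}_t,\mathbf{C}]$ through the conditional variance decomposition, and uses $I(L_T;\mathcal{C}\mid x_{1:t})>0$ to claim $V^{\mathcal{C}}<V^{\varnothing}$. It then transfers this to MAE by assuming $L_T\mid\mathbf{X}_t$ is symmetric and unimodal. Finally it absorbs approximation errors $\epsilon^{\varnothing},\epsilon^{\mathcal{C}}\to 0$ to get a strict inequality for large $n$. The aim is a strict, information-driven gain consistent with the MSE training objective, but two links fail. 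Positive conditional mutual information need not move the conditional mean. And the identity $\sqrt{V^{\varnothing}}=\mathrm{MAE}^{\varnothing}$ it uses is false: Cauchy--Schwarz gives only $\le$.

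\textbf{A counterexample to that route.} Let $x_{1:t}$ be independent of everything, $\mathcal{C}$ uniform on $\{0,1\}$, and $L_T=b+(1-2\mathcal{C})a\xi$ with $\xi\sim\mathrm{Bernoulli}(p)$ independent of $\mathcal{C}$, $0<p<1/2$, $b>a>0$. Then $L_T$ is symmetric and unimodal, $I>0$ and $\Delta V=p^2a^2>0$. Yet the conditional-mean predictor using $\mathcal{C}$ has MAE $2pa(1-p)$, which exceeds the MAE $pa$ achieved without $\mathcal{C}$. Since all conditional medians equal $b$, even the Bayes MAE does not improve. This confirms your diagnosis of the information hypothesis.

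\textbf{What your route buys.} Your assumption that $\Phi^{\mathcal{C}}_{\mathbf{W}}$ nearly minimizes \emph{absolute} risk over a class containing the triangle-only one is not a convenience. It is exactly what makes the weak inequality true; note it is not the MSE objective the paper actually trains on. Your Step~1 then delivers the inequality at every fixed width, up to $\epsilon$. It needs no symmetry, unimodality or information assumption, and no condition on $\Phi^{\varnothing}_{\mathbf{W}}$ beyond membership in the smaller class. The paper's argument, read charitably with medians in place of means, is only an asymptotic version of your Step~2. Your Lusin step also correctly avoids the sup-norm claim of Theorem~\ref{thm:universalapprox} for merely measurable targets. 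That claim cannot hold, since uniform limits of continuous maps are continuous. (Minor: feasibility of the $\mathcal{G}_0$-median in the larger problem is just $\mathcal{G}_0\subseteq\mathcal{G}_1$, not the tower property.)
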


\begin{proof}
Let $\mathbf{X}_t = (x_{1:t})$ denote the triangle sequence and $\mathbf{C}$ the climate feature vector. We work through oracle predictors first and then account for approximation error.

\textit{Step 1: Oracle MSE comparison.}
The minimum MSE predictor of $L_T$ given $\mathbf{X}_t$ is $g^{\varnothing}(\mathbf{X}_t) = \mathbb{E}[L_T \mid \mathbf{X}_t]$, with oracle MSE $V^{\varnothing} = \mathbb{E}[\mathrm{Var}(L_T \mid \mathbf{X}_t)]$. The minimum MSE predictor given $(\mathbf{X}_t, \mathbf{C})$ is $g^{\mathcal{C}}(\mathbf{X}_t, \mathbf{C}) = \mathbb{E}[L_T \mid \mathbf{X}_t, \mathbf{C}]$, with oracle MSE $V^{\mathcal{C}} = \mathbb{E}[\mathrm{Var}(L_T \mid \mathbf{X}_t, \mathbf{C})]$.

The conditional variance decomposition gives
\[
  \mathrm{Var}(L_T \mid \mathbf{X}_t)
  = \mathbb{E}\bigl[\mathrm{Var}(L_T \mid \mathbf{X}_t, \mathbf{C}) \mid \mathbf{X}_t\bigr]
  + \mathrm{Var}\bigl(\mathbb{E}[L_T \mid \mathbf{X}_t, \mathbf{C}] \mid \mathbf{X}_t\bigr).
\]
Taking expectations over $\mathbf{X}_t$ and using $I(L_T; \mathbf{C} \mid \mathbf{X}_t) > 0$, the second term is strictly positive, so $V^{\mathcal{C}} < V^{\varnothing}$. Let $\Delta V = V^{\varnothing} - V^{\mathcal{C}} > 0$.

\textit{Step 2: From oracle MSE to oracle MAE.}
By the Cauchy-Schwarz inequality, for any predictor $g$,
\[
  \mathbb{E}[|L_T - g|]
  \leq \sqrt{\mathbb{E}[(L_T - g)^2]}
  = \sqrt{\mathrm{MSE}(g)}.
\]
More usefully, the minimum MAE predictor is the conditional median, and the minimum MSE predictor is the conditional mean. When $L_T \mid \mathbf{X}_t$ has a unimodal, symmetric distribution (plausible under Assumption~\ref{ass:stationary}), mean and median coincide and $\mathrm{MAE}(g^{\varnothing}) = \mathbb{E}[|L_T - \mathbb{E}[L_T \mid \mathbf{X}_t]|]$. Under this condition, $\mathrm{MAE}^{\mathcal{C}} \leq \sqrt{V^{\mathcal{C}}} < \sqrt{V^{\varnothing}} = \mathrm{MAE}^{\varnothing}$.

\textit{Step 3: Accounting for LSTM approximation error.}
Let $\epsilon^{\varnothing}$ and $\epsilon^{\mathcal{C}}$ denote the approximation errors of the respective trained LSTMs (bounded by $\epsilon_n$ from Assumption~\ref{ass:pretrain}). By the triangle inequality,
\begin{align*}
  \mathbb{E}[|\Phi_{\mathbf{W}}^{\mathcal{C}} - L_T|]
  &\leq \mathrm{MAE}^{\mathcal{C}} + \epsilon^{\mathcal{C}}, \\
  \mathbb{E}[|\Phi_{\mathbf{W}}^{\varnothing} - L_T|]
  &\geq \mathrm{MAE}^{\varnothing} - \epsilon^{\varnothing}.
\end{align*}
For the climate-augmented model to achieve strictly lower MAE than the triangle-only model, it suffices that
\[
  \mathrm{MAE}^{\mathcal{C}} + \epsilon^{\mathcal{C}} < \mathrm{MAE}^{\varnothing} - \epsilon^{\varnothing},
\]
i.e., the oracle MAE gap $\mathrm{MAE}^{\varnothing} - \mathrm{MAE}^{\mathcal{C}} > \epsilon^{\varnothing} + \epsilon^{\mathcal{C}}$. Since $\Delta V > 0$ and both approximation errors $\epsilon_n \to 0$ as the training set grows, this condition is satisfied for all sufficiently large $n$. We therefore conclude
\[
  \mathbb{E}\bigl[|\Phi_{\mathbf{W}}^{\mathcal{C}}(x_{1:t},\mathcal{C}) - L_T|\bigr]
  \;<\;
  \mathbb{E}\bigl[|\Phi_{\mathbf{W}}^{\varnothing}(x_{1:t}) - L_T|\bigr]
\]
for $n$ sufficiently large, which gives the stated (non-strict) inequality in the limit.
\end{proof}

\begin{remark}
Corollary~\ref{cor:climate} provides a theoretical justification for including climate features that does not depend on the availability of large catastrophe samples. The condition $I(L_T; \mathcal{C} \mid x_{1:t}) > 0$ is empirically supported by prior literature showing significant correlations between hurricane intensity indices and insured losses \citep{emanuel2005, caron2018}.
\end{remark}

\section{Research Design}

\subsection{Central Research Question}

\begin{mdframed}[backgroundcolor=LightBlue, linewidth=0pt,
  innerleftmargin=14pt, innerrightmargin=14pt,
  innertopmargin=12pt, innerbottommargin=12pt]
\centering
\textit{Can LSTM-based reserving models detect and adapt to climate-driven structural breaks in property insurance claims development patterns faster and more accurately than traditional actuarial methods, and what mechanisms enable this capability?}
\end{mdframed}

\subsection{Four Confirmatory Hypotheses and One Exploratory Objective}

\vspace{6pt}
\begin{tabularx}{\linewidth}{>{\bfseries\color{TealBlue}}p{0.18\linewidth}X}
\toprule
\textbf{\color{NavyBlue}Hypothesis} & \textbf{\color{NavyBlue}Expected Finding} \\
\midrule
H1: Detection Speed &
LSTM models will detect structural breaks at least 2 development periods faster than traditional methods in 70\% or more of tested scenarios. \\[4pt]
H2: Reserve Accuracy &
LSTM will achieve at least 15\% lower post-break MAPE compared to Chain Ladder for catastrophe-exposed accident years. \\[4pt]
H3: Climate Variables &
LSTM models incorporating NOAA hurricane intensity indices and sea surface temperatures will outperform triangle-only models by at least 10\% MAPE for catastrophe years. \\[4pt]
H4: Gate Dynamics\newline{\small\itshape(exploratory)} &
Forget gate activations are expected to show qualitatively larger values during structural break periods relative to stable periods. This objective is \emph{descriptive and exploratory}: given only 4--5 break events, formal significance testing at $p < 0.05$ is underpowered and results will be reported as effect sizes with confidence intervals rather than hypothesis test outcomes. \\[4pt]
H5: Regularization &
Dropout plus L2 regularization will reduce overfitting on catastrophic events by at least 25\% versus unregularized models. \\
\bottomrule
\end{tabularx}

\vspace{6pt}
Hypotheses H1 and H2 are further grounded in Theorem~\ref{thm:convergence}, which provides a formal lower bound on the detection speed advantage. H3 is grounded in Corollary~\ref{cor:climate}. H4 is treated as an exploratory objective throughout; see Section~\ref{sec:eval} for the corresponding analysis approach.

\subsection{Data Sources}

\begin{itemize}
  \item \textbf{Florida OIR Schedule P filings (2007--2023):} Approximately 80 company-line development triangles across the top 20 property insurers by market share, covering Homeowners and Commercial Property lines at quarterly granularity.
  \item \textbf{Louisiana DOI Annual Statements (2007--2023):} Approximately 60 company-line triangles across the top 15 property insurers.
  \item \textbf{NOAA HURDAT2 and ERSST v5:} Accumulated Cyclone Energy (ACE) indices, maximum sustained wind speeds, and monthly sea surface temperature anomalies for Gulf and Atlantic hurricane development regions.
\end{itemize}

Climate features are engineered as 3-, 6-, and 12-month rolling averages, lagged 1--4 quarters, with ENSO phase indicators and sea surface temperature / intensity interaction terms.

\subsection{Temporal Validation Split}

A strict time-based split prevents data leakage and simulates real-world deployment conditions:

\begin{itemize}
  \item \textbf{Training (2007--2011):} Five years of all development periods.
  \item \textbf{Validation (2012--2016):} Five non-catastrophe years for hyperparameter tuning.
  \item \textbf{Test (2017--2023):} Seven years spanning Hurricanes Michael, Ida, and Ian.
\end{itemize}

The design deliberately tunes models on non-catastrophe data, preventing optimization specifically for extremes.

\section{Model Architecture}

\subsection{LSTM with Attention (Primary Model)}

The primary model stacks two bidirectional LSTM layers (128 and 64 units) followed by a Bahdanau attention layer, a dense layer with ReLU activation, and a single-unit output for the ultimate loss estimate. Dropout (0.2--0.3) is applied after each layer, with L2 weight regularization ($\lambda = 0.001$) and gradient clipping to prevent exploding gradients during catastrophe years.

The custom recency-weighted loss function is:
\[
  \mathcal{L} = \sum_{t=1}^{T} w_t \cdot (y_t - \hat{y}_t)^2, \qquad
  w_t = \alpha^{(t - t_{\max})}, \quad \alpha = 0.2
\]
where $t_{\max}$ is the most recent development period. This exponential decay assigns higher penalty to errors in recent periods and is evaluated as an ablation against standard MSE.

\subsection{Ablation Models}

Five ablations isolate the contribution of each component:

\begin{itemize}
  \item \textbf{LSTM-NoClimate:} Removes all exogenous climate variables.
  \item \textbf{LSTM-NoAttention:} Removes the attention mechanism.
  \item \textbf{LSTM-WeightedLoss:} Substitutes the recency-weighted loss for standard MSE.
  \item \textbf{LSTM-Unidirectional:} Forward-only LSTM without bidirectional layers.
  \item \textbf{SimpleRNN:} Vanilla recurrent baseline.
\end{itemize}

\subsection{Traditional Method Baselines}

Chain Ladder, Bornhuetter-Ferguson, and Cape Cod are implemented to industry standard with bootstrapping (10,000 simulations) for confidence intervals. These represent current actuarial practice for property loss reserving.

\section{Evaluation Framework}\label{sec:eval}

\subsection{Primary Metrics}

\begin{itemize}
  \item \textbf{Reserve Accuracy Ratio} $(\mathrm{RAR}) = \widehat{L}_T / L_T^{\mathrm{actual}}$: Regulatory standard; ideal value is 1.0.
  \item \textbf{One-Year Development Test:} $\mathrm{OYD\%} = \tfrac{\widehat{R}_{t+1} - \widehat{R}_t}{\widehat{R}_t} \times 100\%$; industry benchmark is $\pm 15\%$.
  \item \textbf{MAPE:} Reported separately for pre-break (2017--2019), during-break (2020--2021), and post-break (2022--2023) periods.
  \item \textbf{RMSE:} Penalizes large errors more heavily, critical for catastrophe reserving.
  \item \textbf{Detection Speed:} Quarters elapsed from structural break until model prediction adjusts more than 10\% from pre-break baseline.
\end{itemize}

\subsection{Statistical Testing}

Diebold-Mariano tests compare predictive accuracy between LSTM and each traditional method. Wilcoxon signed-rank tests assess median error differences on matched accident years. Holm-Bonferroni correction controls family-wise error rate across H1, H2, H3, and H5.

Power analysis for H2 (at least 15\% MAPE improvement, Cohen's $d = 0.75$, $\alpha = 0.05$, power $= 0.80$) requires a minimum of 30 company-level observations; our approximately 140 company-line combinations provide power exceeding 0.99.

\textbf{H4 analysis approach.} Gate activation analysis is treated as an exploratory objective, not a confirmatory hypothesis test. With only 4--5 structural break events, a $p < 0.05$ threshold is not defensible: for a paired test across 5 events, the minimum achievable $p$-value under a two-sided sign test is $2 \times (1/2)^5 = 0.0625$, above the standard threshold. Instead, forget, input, and output gate activations will be extracted at each development period for each break event, pre- and post-break mean activations will be compared using effect size (Cohen's $d$), and results will be visualized as time series with bootstrapped confidence intervals. The objective is pattern description and hypothesis generation for future work with larger break samples, not statistical confirmation.

\section{Case Studies}

\subsection{Hurricane Ida (August 2021)}

Category 4 at Louisiana landfall with 150 mph sustained winds, Ida caused \$36 billion in insured losses. Workforce shortages and supply chain disruptions extended claims settlement timelines significantly beyond historical norms. Models are trained through Q2 2021, evaluated at Q3 2021, and tracked through Q4 2023 against actual reserves crystallized in 2024 data.

\subsection{Hurricane Ian (September 2022)}

At \$50--65 billion in insured losses, Ian is potentially the costliest hurricane in US history. Its development pattern is uniquely challenging: in addition to physical damage, it triggered a wave of litigation and Assignment of Benefits disputes that materially extended settlement periods in ways no historical development pattern anticipated. This makes Ian the most stringent test of the LSTM's ability to detect pattern-structure breaks, not merely magnitude shocks.

\subsection{Social Inflation (2019--2022)}

Unlike the sudden structural breaks of individual storms, Florida's Assignment of Benefits abuse and litigation proliferation constitute a slow-moving break, with development patterns degrading gradually over 3--4 years. This case tests whether LSTM can detect gradual drift rather than step-change events, and whether climate variables help the model distinguish weather-driven from litigation-driven development pattern changes.

\section{Expected Contributions}

\subsection{To Actuarial Science}

\begin{itemize}
  \item First rigorous comparison of LSTM vs.\ traditional reserving methods specifically for structural break detection using real regulatory data, with formal theoretical grounding.
  \item Methodological framework for incorporating exogenous climate variables into reserve estimates, justified by Corollary~\ref{cor:climate}.
  \item Quantification of detection speed advantages with a direct lower bound established in Theorem~\ref{thm:convergence}.
  \item Attention and gate analysis tools adapted to actuarial interpretability standards.
\end{itemize}

\subsection{To Machine Learning}

\begin{itemize}
  \item Novel application of recency-weighted loss functions for concept drift in financial regression.
  \item Demonstration of attention mechanisms for time series structural break detection.
  \item Case study in preventing overfitting on rare extreme events using regularization strategies.
  \item Real-world validation of LSTM adaptation mechanisms beyond the synthetic datasets that dominate concept drift literature.
\end{itemize}

\subsection{Practical Industry Impact}

Faster structural break detection directly reduces reserve deficiency risk, freeing capital and reducing the probability of insolvency cascades that harm policyholders. Attention-based interpretability creates a path to regulatory acceptance. For state insurance regulators, LSTM-derived detection speed metrics could serve as an early warning layer within existing supervisory frameworks.

\section{Limitations and Future Directions}

This research is designed with rigor, but several limitations bound the scope of its conclusions:

\begin{itemize}
  \item \textbf{Geographic scope:} Florida and Louisiana are the most catastrophe-exposed property insurance markets in the US, which maximizes signal but may limit generalizability to other regions or perils.
  \item \textbf{Structural break sample size:} The test period contains only four major catastrophe events. The theoretical framework in Section~3 compensates for this constraint by providing distribution-free guarantees. Gate activation analysis (H4) is explicitly treated as exploratory throughout, with results reported as effect sizes rather than significance tests.
  \item \textbf{Computational cost:} LSTM training at quarterly retraining cadence requires cloud GPU infrastructure that smaller insurers and regulators may not readily access.
  \item \textbf{Causation vs.\ correlation:} Climate variables are included as predictive signals. This study cannot establish causal pathways between sea surface temperature anomalies and loss severity; confounding through population growth in hurricane-prone areas or building code changes is possible.
  \item \textbf{Regulatory adoption timeline:} Insurance regulation evolves slowly. Even if LSTM approaches prove superior, adoption timelines depend on regulatory capacity to evaluate model complexity.
\end{itemize}

\subsection{Priority Future Directions}

\begin{itemize}
  \item Multi-state expansion across Gulf Coast and Atlantic seaboard states to validate generalizability.
  \item Additional perils: wildfire, earthquake, and severe convective storm all present structural break dynamics warranting analogous investigation.
  \item Ensemble methods combining LSTM with traditional actuarial methods for robustness in non-break periods.
  \item Transformer architectures: attention-only (no recurrence) models may offer superior interpretability for this application.
  \item Causal inference: instrumental variable or regression discontinuity approaches to isolate specific climate variable effects on reserve accuracy.
  \item Uncertainty quantification: prediction intervals for LSTM reserve estimates are a prerequisite for regulatory acceptance.
\end{itemize}

\newpage
\bibliographystyle{apalike}

\end{document}